\newtheorem{theorem}{Theorem}
\newtheorem{definition}{Definition}
\title{3D-RPE: Enhancing Long-Context Modeling Through 3D Rotary Position Encoding}
\author{
     {Xindian Ma}\textsuperscript{\rm 1}, 
     {Wenyuan Liu}\textsuperscript{\rm 1},
     {Peng Zhang}{\textsuperscript{\rm 1}}{\thanks{Corresponding Author: Peng Zhang}}~~, 
     {Nan Xu}{\textsuperscript{\rm 2}}\\
     \textsuperscript{\rm 1} College of Intelligence and Computing, Tianjin University, Tianjin, China \\
     \textsuperscript{\rm 2} Beijing Wenge Technology Co.
Ltd.\\
     \{xindianma, lwy2020, pzhang\}@tju.edu.cn\\
     \{xunan2015\}@ia.ac.cn \\
}
\begin{document}

\maketitle
\begin{abstract}
Inspired by the Bloch Sphere representation, we propose a novel rotary position encoding on a three-dimensional sphere, named 3D Rotary Position Encoding (3D-RPE). 3D-RPE is an advanced version of the widely used 2D Rotary Position Encoding (RoPE), with two major advantages for modeling long contexts: controllable long-term decay and improved position resolution.
For controllable long-term decay, 3D-RPE allows for the regulation of long-term decay within the chunk size, ensuring the modeling of relative positional information 
between tokens at a distant relative position.
For enhanced position resolution, 3D-RPE can mitigate the degradation of position resolution caused by position interpolation on RoPE. 
We have conducted experiments on long-context Natural Language Understanding (NLU) and long-sequence Language Modeling (LM) tasks. From the experimental results, 3D-RPE achieved performance improvements over RoPE, especially in long-context NLU tasks.
% In summary, 3D-RPE can overcome the limitations of the RoPE in modeling long-contexts.
% and holds promise as a foundational positional encoding strategy for LLMs.
\end{abstract}

\begin{figure}[ht]
\vskip 0.1in
\small
\begin{center}
\includegraphics[width=12.3cm,height=9cm]{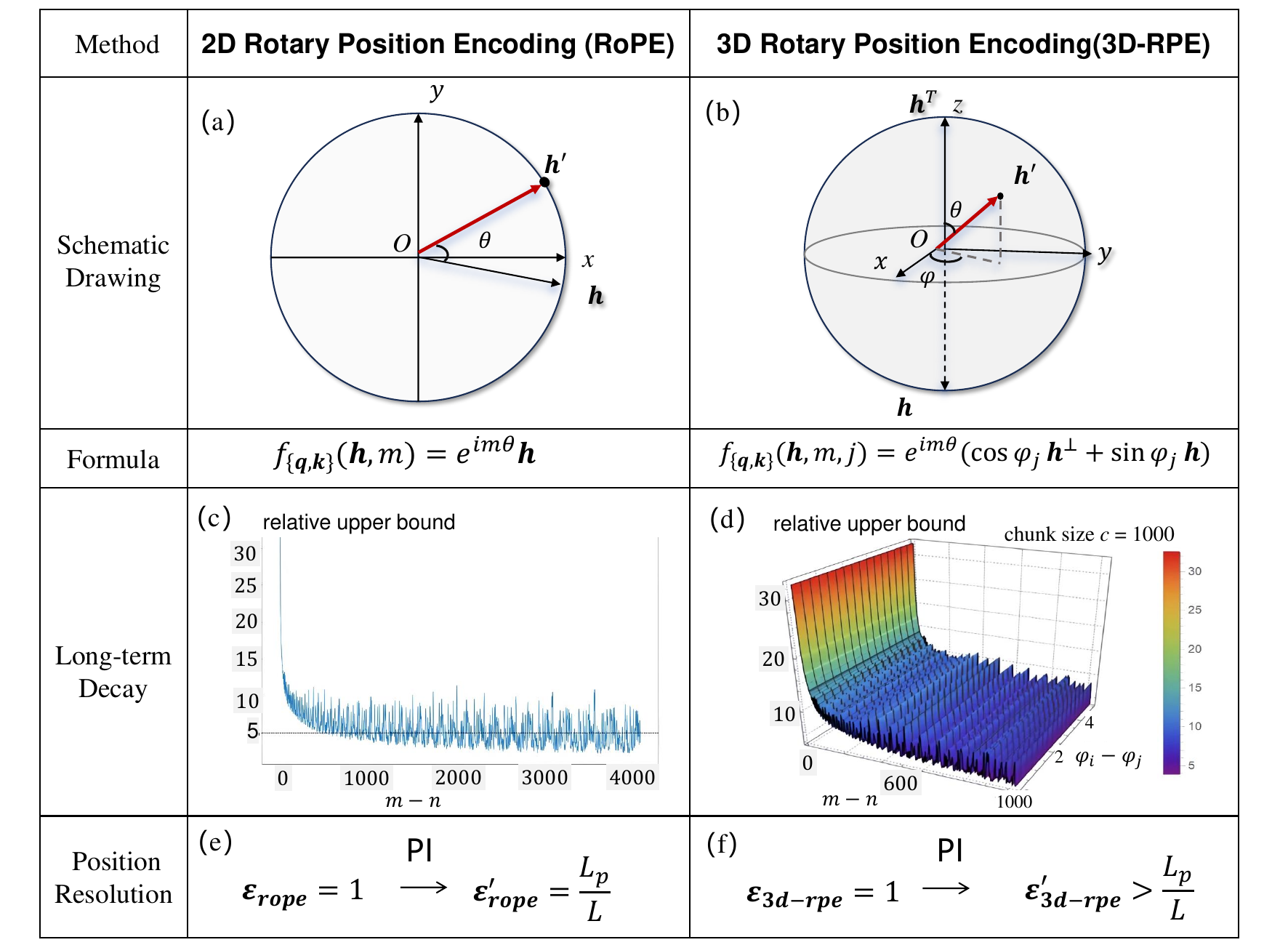}
\caption{2D Rotary Position Encoding (RoPE) vs. 3D Rotary Position Encoding (3D-RPE).}
\label{fig:correlation}
\end{center}
\vskip -0.2in
\end{figure}

\section{Introduction}
\label{introduction}
Rotary Position Encoding (RoPE)~\cite{su2024roformer} is essential in Transformer-based Large Language Models (LLMs), such as the LLaMA models~\cite{touvron2023llama}. 
RoPE merges the advantages of absolute and relative positional encoding by using a rotation mechanism to represent each position.
Despite its widespread use in LLMs~\cite{touvron2023llama,wang2021gptj6b,chiang2023vicuna}, RoPE has notable limitations when extending LLMs with a predefined context window. 
The long-term decay problem of RoPE limits the model's ability to extend positions outward in long-context tasks. 
Although the long-context modeling capability of LLMs can be extended through position interpolation, as more positions are inserted, RoPE encounters the challenge of decreased position resolution. 

We propose a novel position encoding mechanism for transformer architecture, called 3D Rotary Position Encoding (3D-RPE), to address challenges in long-context modeling faced by LLMs using RoPE.
Inspired by the Bloch Sphere representation, 3D-RPE applies rotary position encoding on a three-dimensional spherical surface, as illustrated in Figure~\ref{fig:correlation}(b). In contrast, RoPE employs rotation on a two-dimensional circular path, as depicted in Figure~\ref{fig:correlation}(a). 
RoPE suffers from long-term decay, as shown in Figure~\ref{fig:correlation}(c), implying that as the relative distance increases, the relative upper bound on token correlations at modeled relative positions will continuously decrease. 3D-RPE addresses this issue by segmenting a long sequence into chunks and setting rotation angles within and between the chunks to construct position encoding.
As shown in Figure \ref{fig:correlation}(d), 3D-RPE is able to control this relative upper bound through two relative positional dimensions, namely within and between chunks. Compared to Figure \ref{fig:correlation}(c), this method improves the upper bound on correlations between long relative distances and alleviates the issue of long-term decay.

Position Interpolation (PI) methods~\cite{chen2023extending, peng2023yarn} based on RoPE are often employed to extend LLMs for modeling contexts that exceed the pre-training length.
These techniques scale the position encoding during inference, allowing the originally out-of-range position encoding to fall within the trained position interval after interpolation.
However, as the interpolation factor increases, PI experiences a substantial decline in positional resolution among tokens, detrimentally affecting long-context modeling performance. 
As illustrated in Figure~\ref{fig:correlation}(e), extending the pre-training length $L_p$ to $L$ using linear PI~\cite{chen2023extending} leads to reduced positional resolution with increasing $L$. 
3D-RPE employs a 3D rotating sphere for position encoding, which supports higher positional resolution compared to the 2D circular rotation. Similarly, through linear PI extension, 3D-RPE achieves a positional resolution superior to $\frac{L_p}{L}$ (See Figure~\ref{fig:correlation}(f)).
This benefit has been theoretically proven (Refer to Theorem 1 in Section~\ref{sencond:benefits}) and corroborated by experimental results (Refer to Table~\ref{train_free_valid_pg19} in Section~\ref{experiments:nlu}).

We conducted experiments on long-sequence Language Modeling (LM) and long-context Natural Language Understanding (NLU) tasks. Our experimental results highlight the promising performance of the 3D-RPE method, especially in tasks requiring long-context language understanding.

Our major contributions of this paper are as follows:
\begin{itemize}  
  \item A position encoding method on a 3D sphere, 3D-RPE, is provided, which can enhance the long-context modeling capability of LLMs by replacing RoPE.
  \item It is proved that 3D-RPE has two benefits, controllable long-term decay and mitigating the reduction in positional resolution caused by position interpolation.
  \item LLMs combine with 3D-RPE have achieved significant performance improvements in long-context NLU tasks.
\end{itemize}
The structure of this paper is as follows. Section~\ref{Preliminaries} covers the preliminaries of 3D-RPE,
Bloch Sphere, and RoPE. Section~\ref{method} explains the construction of 3D-RPE on a 3D rotating sphere and highlights its benefits over RoPE. Section~\ref{related_work} reviews related work. In Section~\ref{experiments:all}, we validated the advantages of our method through experiments. Section~\ref{couclusion} concludes with a discussion on 3D-RPE's impact.

\section{Preliminaries}
\label{Preliminaries}
The analysis of 3D-RPE relies on these concepts and results from the filed of Bloch Sphere and RoPE. We offer an introduction to Bloch Sphere in Section~\ref{pre:bs} and  RoPE~\cite{su2024roformer} in Section~\ref{pre:rope}.
\subsection{Bloch Sphere}
\label{pre:bs}
Bloch Sphere (BS) offers a geometric depiction of a quantum mechanical system's pure state, limited to two levels. 
The state vector $|\phi\rangle$ is mathematically expressed as
\begin{equation}
\begin{aligned}
\ket{\phi}=e^{\mathrm{i}\theta}(\cos{\frac{\varphi}{2}}\ket{0}+\sin{\frac{\varphi}{2}}e^{i\theta_1}\ket{1})
\end{aligned}
\label{bs-equ}
\end{equation}
where $|0\rangle$ and $|1\rangle$ are Dirac's notations. ${\theta}$, ${\theta}_1$ and $\varphi$ are rotation angles.
In our work, $\theta$ encodes the relative positions of tokens within chunks, $\varphi$ encodes the relative positions of tokens across chunks, and $\theta_1$ is equal to $0$.
Some other concepts about BS are showed in Supplementary Materials A.

\subsection{Rotary Position Embedding}
\label{pre:rope}  
Rotary Position Embedding (RoPE) is a commonly used relative position encoding technique in LLMs, such as LLaMA~\cite{touvron2023llama}, GPT-J~\cite{wang2021gptj6b}, Vicuna~\cite{chiang2023vicuna} and etc. 
% for examples, LLaMA~\cite{touvron2023llama}, GPT-J~\cite{wang2021gptj6b}, and etc. 
RoPE is a 2-dimensional space rotary encoding, which is denoted as follows:
\begin{equation}
\begin{aligned}
\label{eq:2d-rope}
RoPE(\bm{h}_m, m) = e^{\mathrm{i}m\theta}\bm{h}_m ~,~
RoPE(\bm{h}_n, n) = e^{\mathrm{i}n\theta}\bm{h}_n
\end{aligned}
\end{equation}
$\bm{h}_m$ and $\bm{h}_n$ are hidden vectors from the query and key for a specific attention head in transformer. For ease of differentiation, $\bm{h}_m$ and $\bm{h}_n$ can be refined later as $\bm{q}_m$ and $\bm{k}_n$,
$\mathrm{i}$ is the imaginary unit, $\theta$ is the rotary angle in RoPE. $m$ and $n$ are indexes about positions. 
Then, the inner product is employed to define the self-attention score before softmax computing:
\begin{equation}
\label{rope_attention}
\begin{aligned}
s(m-n,\bm{q}_m,\bm{k}_n) &=  \langle RoPE(\bm{q}_m, m), RoPE(\bm{k}_n, n) \rangle \\
       &= Re[\sum_{l=0}^{{d/2}-1}\bm{q}_{[2l:{2l+1}]}\bm{k}_{[2l:{2l+1}]}e^{\mathrm{i}(m-n)\theta_l}]
\end{aligned}
\end{equation}
Eq~(\ref{rope_attention}) is unary function respect to the relative position $(m-n)$, representing the relative position 
between tokens and modeling the relative positional information. Here,  $Re[\cdot]$ denotes the calculation of the real part of a complex number. In our study, the 3D-RPE self-attention score is a binary function containing the relative position $(m-n)$. 

\section{Method}
\label{method}
Section~\ref{cpe:method} introduces the new position encoding on a 3D sphere, 3D-RPE. Section~\ref{Properties} focuses on analyzing two benefits of 3D-RPE, namely controllable long-term decay and enhanced position resolution.

\begin{figure}[t]
\vskip 0.1in
\begin{center}
\includegraphics[width=12.0cm,height=9.5cm]{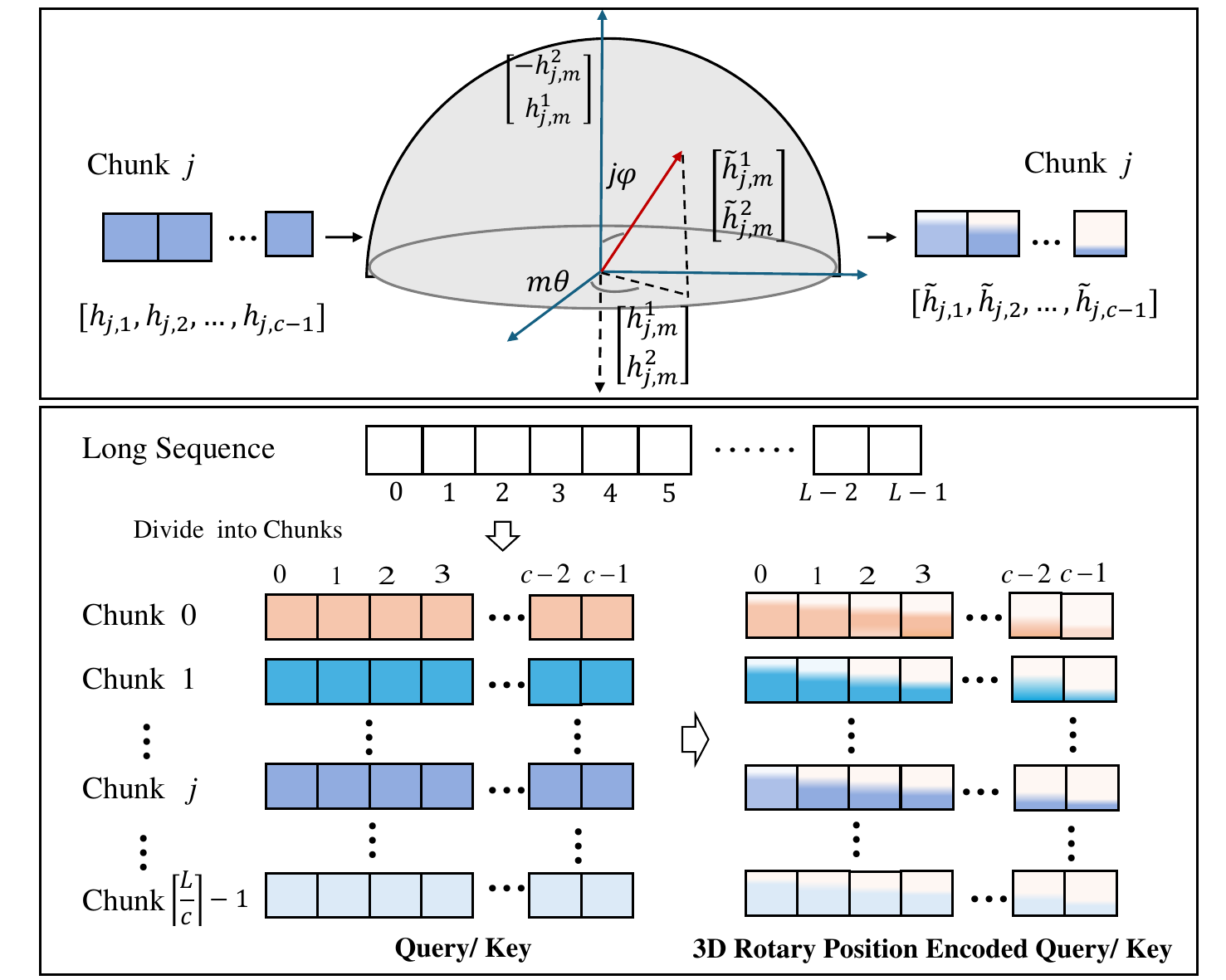}
\caption{
Visualization of the 3D Rotary Position Encoding (3D-RPE). The context size is $L$, and the chunk size is $c$. The vectors ${[\bm{h}_{j,m}^{1}, \bm{h}_{j,m}^{2}]}^{T}$ and ${[-\bm{h}_{j,m}^{2}, \bm{h}_{j,m}^{1}]}^{T}$ form an orthogonal basis, corresponding to the $\ket{1}$ and $\ket{0}$ states in Eq.~(\ref{bs-equ}). The components $\bm{h}_{j,m}^{1}$ and $\bm{h}_{j,m}^{2}$ represent the first and second dimensions of the state vector $\bm{h}_{j,m}$, which is the $m_{th}$ token in the $j_{th}$ chunk.
}
\label{cpe:model_struct}
\end{center}
\vskip -0.2in
\end{figure}

\subsection{3D Rotary Position Encoding}
\label{cpe:method}
For a long sequence of length $L$ and a chunk size set to $c$, where $c$ is smaller than the pre-training length of LLM, the sequence can be divided into $\lceil L/c\rceil$ chunks. 
Here, $\lceil . \rceil$  represents the ceiling function, rounding up to the nearest integer (see Figure~\ref{cpe:model_struct}). 
The state vector $\bm{h}_{j,m}$ comes from either Query or Key. Here, $j \in [0, {\lceil L/c\rceil}-1]$ represents the positional index of the chunk, and $m \in [0, c-1]$ indicates the positional index of the token within the chunk. This is used to calculate the new state vector $\widetilde{\bm{h}}_{j,m}$ by rotating the Bloch Sphere.
Specifically, two rotation angles, $\theta$ and $\varphi$ are defined, with $\theta$ governing the position encoding within the chunk’s internal tokens, and $\varphi$ governing the position encoding between the chunks. 
Our position encoding method is called 3D Rotary Position Encoding, or 3D-RPE.
The formal definition of 3D-RPE is provided as follows. The computational process of 3D-RPE in practice is provided in Supplementary Materials B.1. 
\begin{definition} [\textbf{3D Rotary Position Encoding}]
Let $\bm{h}_{j, m}\in\mathbb{R}^d$ be a state vector of an attention head without position encoding, where $d$ is the dimension of the vector, which is an even number. 3D-RPE encodes $\bm{h}_{j, m}$ into the vector $\widetilde{\bm{h}}_{j,m}$, which can be formalized as:
\begin{equation}
\label{eq:cpe-general}
\widetilde{\bm{h}}_{j,m} = e^{\mathrm{i}m\theta}(\cos{\varphi_j}\bm{h}_{j, m}^{\perp} + \sin{\varphi_j}\bm{h}_{j, m})
\end{equation} 
$\mathrm{i}$ is the imaginary unit.
$\bm{h}_{j, m}^{\perp}$ equals to ${[-\bm{h}_{j,m}^{2}, \bm{h}_{j,m}^{1}]}^{T}$, where $\bm{h}_{j,m}^{1} \in \mathbb{R}^{d/2}$ and $\bm{h}_{j,m}^{2} \in \mathbb{R}^{d/2}$ is the first and second halves of the state vector $\bm{h}_{j,m}$. 
\end{definition}

In transformer-based LLMs, after applying position encoding  to the state vectors from Query and Key, it is essential to compute their attention scores. 
For the sake of clarity and formalization, we denote the position encoding of the state vector from Query as 3d-PE$(\bm{q},i,m)$ and from Key as 3d-PE$(\bm{k},j,n)$, where $i$ and $j$ range from $0$ to ${\lceil L/c\rceil}-1$, and $m$ and $n$ range from $0$ to $c-1$. The self-attention score can be obtained through the conjugate symmetric inner product of $\bm{q}_{i,m}$ and $\bm{k}_{j,n}$, which are the state vectors from Query and Key, 
\begin{equation}
\label{eq:general-cpe-attn}
s(\bm{q}_{i,m},\bm{k}_{j,n},\varphi_i - \varphi_j,m-n)=Re[e^{\mathrm{i}(\varphi_i-\varphi_j)}\sum\limits_{l=0}^{d/2-1}e^{\mathrm{i} (m-n)\theta_l}(\bm{q}_{l}\bm{k}_{l}+\bm{q}_{d/2+l}\bm{k}_{d/2+l})]
\end{equation}
where $l \in [0, {\frac{d}{2}-1}]$, $\varphi_i=base^{-i}$ and $\varphi_j=base^{-j}$. Let ${\{\bm{q},\bm{k}\}}_l$ denote the $l$-th components of ${\{\bm{q},\bm{k}\}}$. 
In experiments using the LLaMA2 models, the $base$ is generally set to $10,000$.
The self-attention score computed after applying 3d-PE is a function of both the relative position between chunks ($\varphi_i- \varphi_j$) and the relative position ($m-n$). 

Consequently, the self-attention score relying on 3d-PE is influenced by the relative positions at both the chunk and token levels.
It is important to highlight that when $\bm{q}_{i,m}$ and $\bm{k}_{j,n}$ reside within the same chunk (i.e., $i=j$), Eq. (\ref{eq:general-cpe-attn}) simplifies to the standard RoPE formulation as depicted in Eq. (\ref{rope_attention}). 
For a detailed derivation and computation process of Eq. (\ref{eq:general-cpe-attn}), as well as the complete formulation of Eq. (\ref{eq:cpe-general}), please refer to Supplementary Materials B.2.

% \begin{figure}[t]
% \vskip 0.1in
% \begin{center}
% \includegraphics[width=\columnwidth]{upper bound.pdf}
% \caption{Visualization compares the decaying trend of upper bounds of inter-token correlations with increasing relative positions for two different position encodings in long sequence modeling. (a) corresponds to RoPE, and (b) corresponds to 3D-RPE.}
% \label{fig:upper-bound}
% \end{center}
% \vskip -0.2in
% \end{figure}

\subsection{Benefits of 3D-RPE}
\label{Properties}
In this section, we delve into two benefits offered by 3D-RPE: the ability to control long-term decay and mitigate the reduction in positional resolution caused by position interpolation.

\subsubsection{Controllable Long-term Decay}
3D-RPE has the property of controllable long-term decay. Like RoPE, taking the absolute value $s$ in Eq~(\ref{eq:general-cpe-attn}) and applying the Abel transformation, we derive the upper bound of the correlation coefficients related to term dependencies as follows:

\begin{equation}
\label{upper_bound_cpe}
\begin{aligned}
|s(\bm{q}_{i,m},\bm{k}_{j,n},\varphi_i - \varphi_j, m-n)|
&\leq|e^{\mathrm{i}(\varphi_i - \varphi_j)}||\sum\limits_{l=0}^{d/2-1}E_{l+1}(h_{l+1}-h_l)| \\
&\leq(\max_{l}|h_{l+1}-h_l|)\sum\limits_{l=0}^{d/2-1} |E_{l+1}|\end{aligned}
\end{equation}
where $E_l=\sum_{k=0}^{l-1}e^{\mathrm{i}(m-n)\theta_k}$ and $E_0=0$.
For RoPE~\cite{su2024roformer}, the relative upper bound $E_{rope}$ is given by
$\frac{1}{d/2} \sum_{j=1}^{d/2}|S_j|$, where $S_{j}=\sum_{t=0}^{j-1}e^{i(m-n)\theta_{t}}$ (see the section 3.4.3 of RoPE~\cite{su2024roformer}). By setting $\theta_t = 10000^{\frac{-2t}{d}}$, the value decays as the relative position $(m-n)$ increases. For the upper bound $E_{3d-rpe}$ of 3D-RPE, it is formalized as follows:
\begin{equation}
\label{upper_bound}
E_{3d-rpe} = \frac{1}{d/2} \sum_{j=1}^{d/2}|E_l|
\end{equation}
The domains of the relative position $(m-n)$ differ between $E_{3d-rpe}$ and $E_{rope}$. In $E_{rope}$, $(m-n)$  is in the range $[0,L-1]$, while in $E_{3d-rpe}$, it is in $[0,c-1]$. The relative positions between tokens exceeding the chunk size $c$ are constructed collaboratively using positional encoding within and across chunks.
The Relative Position Matrix $\bm{A}$ using 3D-RPE is shown in Figure~\ref{RelativePositionMatrix}.

To illustrate the advantage of controllable long-term decay, we present the results in Figure~\ref{fig:correlation}(c) and Figure~\ref{fig:correlation}(d). As shown in Figure~\ref{fig:correlation}(c), when the relative position $(m-n)$ exceeds approximately $1000$, $E_{rope}$ begins to significantly decrease to below $5$. 
This limitation of $E_{rope}\leq5$ poses challenges for RoPE in modeling attention scores between tokens with longer relative distances (greater than $4000$).
In contrast, as shown in Figure~\ref{fig:correlation}(d), 3D-RPE employs both $(m-n)$ and $(\varphi_i-\varphi_j)$, setting $c=1000$ to keep $(m-n)$ within $1000$, thereby preventing decay over longer distances. This method ensures $E_{3d-rpe}$ stays at or above $5$ for all relative positions.

\begin{figure}[t]
\vskip 0.1in
\begin{center}
\includegraphics[width=\columnwidth]{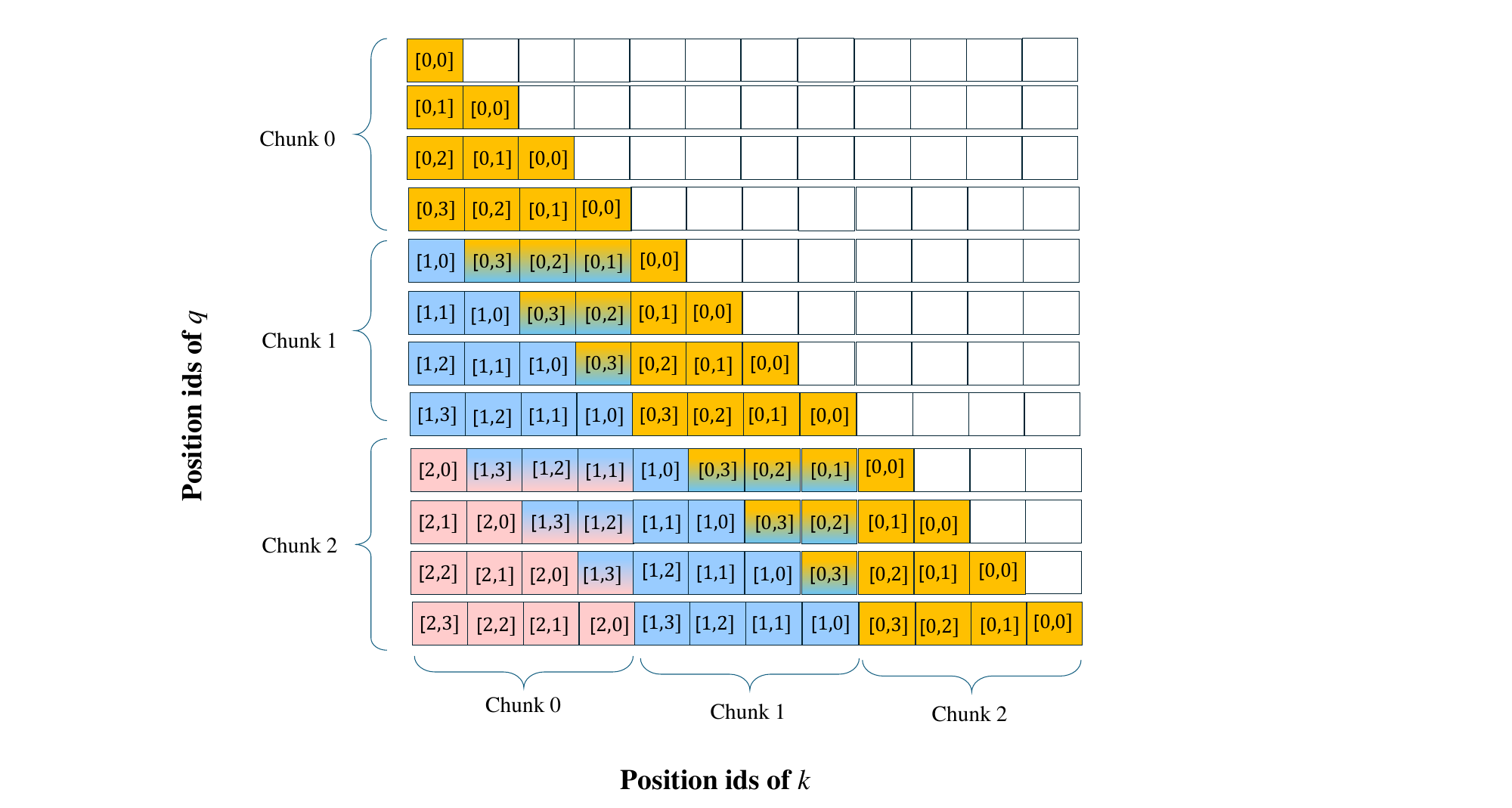}
\caption{Visualization of the Relative Position Matrix $\bm{A}$ employing 3D-RPE, with chunk size $c$=$4$, and sequence size $L$=$12$. The matrix elements $A_{i,j}$ represents the relative position between the $i_{th}$ query vector $\bm{q}$ and the $j_{th}$ key vector $\bm{k}$.}
\label{RelativePositionMatrix}
\end{center}
\vskip -0.2in
\end{figure}

\subsubsection{Enhanced Positional Resolution}
\label{sencond:benefits}
Position Interpolation (PI)~\cite{chen2023extending} has been introduced to scale down the position indices to align with the original window size, resulting in enhanced outcomes for context extension. 
However, as the extension length and interpolation increase, PI can lead to a reduction in relative positional resolution.
3D-RPE can be used alongside PI for long-context extensions. 
Compared to RoPE combined with PI, 3D-RPE has the advantage of mitigating the reduction in positional resolution caused by positional interpolation, as demonstrated in Theorem~\ref{thm:interpolation}.
\begin{theorem}
[\textbf{Enhanced Position Resolution}]
\label{thm:interpolation}
For a pre-trained language model with a length of $L_p$ and an extension length requirement of $L$, employing linear position interpolation extension methods $\mathcal{I}$ based on Rotary Position Encoding (RoPE) can elevate the relative positional resolution from $\mathcal{E}_{rope}$ to $\mathcal{E}_{rope}^\prime$. Let $\mathcal{E}_{3d-rpe}^\prime$ denote the relative positional encoding resolution achieved by the method $\mathcal{I}$ based on 3D-RPE, with chunk size $c \geq 3$, there is:
\begin{equation}
\label{cpe:resolution}
   \mathcal{E}_{3d-rpe}^\prime  > \mathcal{E}_{rope}^\prime
\end{equation}
\end{theorem}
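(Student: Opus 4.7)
The plan is to compare the two resolutions by first pinning down an operational definition and then exploiting the key structural difference: whereas RoPE compresses every integer index uniformly under linear PI, 3D-RPE's intra-chunk index only needs to fit inside $[0, c-1]$ and, since $c \leq L_p$, requires no compression at all. First I would fix $\mathcal{E}$ to be (proportional to) the minimum angular gap between neighbouring relative-position encodings that the inner-product forms in Eq.~(\ref{rope_attention}) and Eq.~(\ref{eq:general-cpe-attn}) can distinguish; with this definition, $\mathcal{E}_{rope}$ is a constant determined only by the base and the head dimension $d$, and in particular is independent of $L$. Linear PI rescales the integer index $m$ to $m L_p/L$ and hence compresses every neighbouring pair uniformly, giving $\mathcal{E}_{rope}^\prime = (L_p/L)\,\mathcal{E}_{rope}$, which matches the informal estimate depicted in Figure~\ref{fig:correlation}(e).

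Second I would compute $\mathcal{E}_{3d-rpe}^\prime$ by exploiting the factorisation of Eq.~(\ref{eq:general-cpe-attn}) into an intra-chunk rotation $e^{\mathrm{i}(m-n)\theta_l}$ and a between-chunk rotation $e^{\mathrm{i}(\varphi_i - \varphi_j)}$. Because $c \leq L_p$, the intra-chunk indices $m \in [0, c-1]$ already live inside the pre-trained range and the PI protocol $\mathcal{I}$ leaves them untouched, so the intra-chunk angular spacing is still the original $\mathcal{E}_{rope}$. I would then argue that any two relative positions in the extended sequence that differ by a single unit step can, after reindexing onto an interior chunk, be realised as an intra-chunk step, so this intra-chunk spacing lower-bounds the overall resolution: $\mathcal{E}_{3d-rpe}^\prime \geq \mathcal{E}_{rope}$. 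The desired inequality $\mathcal{E}_{3d-rpe}^\prime > \mathcal{E}_{rope}^\prime$ then reduces to $\mathcal{E}_{rope} > (L_p/L)\,\mathcal{E}_{rope}$, i.e.\ $L > L_p$, which is the standing assumption of any non-trivial context extension. The hypothesis $c \geq 3$ enters to ensure that the intra-chunk fine structure is non-degenerate: $c = 1$ collapses the scheme onto its between-chunk factor with no intra-chunk dimension at all, and $c = 2$ provides only the trivial offsets $\{0, \pm 1\}$, which is too coarse for the reindexing argument to cover every neighbouring pair.

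The main obstacle I anticipate is formalising the resolution comparison cleanly, because 3D-RPE couples the uniform intra-chunk law $m\theta$ with the geometric between-chunk law $\varphi_j = base^{-j}$, whose spacings are non-uniform and shrink rapidly in $j$; a naive definition of resolution as the pointwise minimum over \emph{all} position pairs would be dominated by the between-chunk tail and would obscure the gain. My strategy is to sidestep quantifying the between-chunk spacing altogether by the reindexing argument above, so that the comparison reduces to a single line pitting the unperturbed RoPE-on-$[0,c-1]$ resolution against the PI-scaled RoPE-on-$[0,L-1]$ resolution, which is immediate from $L > L_p$.
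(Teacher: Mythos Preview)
Your overall strategy matches the paper's: both observe that under the PI protocol the intra-chunk integer indices stay inside the pre-trained window, so the intra-chunk resolution is unchanged (the paper normalises to $\mathcal{E}_{rope}=1$ and gets $\mathcal{E}'_{rope}=L_p/L$, exactly as you do), and then compares. The gap is at the boundary step. Your plan to ``reindex onto an interior chunk'' does not work as stated: the 3D-RPE encoding of a token at $(j,m)$ depends on the chunk index $j$ through $\varphi_j=base^{-j}$, and this dependence is not translation-invariant in $j$, so an adjacent pair that straddles a chunk boundary, say $\bm{k}_{(i,\,c'-1)}$ and $\bm{q}_{(i+1,\,0)}$, cannot be relocated into the interior of a single chunk without changing both encodings. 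The minimum over all adjacent pairs therefore \emph{must} include these cross-boundary pairs, precisely the case you tried to sidestep in your last paragraph.

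The paper handles that case by a direct computation rather than any reindexing: it writes the boundary resolution as $c'-1+(\varphi_{i+1}-\varphi_i)/\theta$ and uses $(\varphi_{i+1}-\varphi_i)/\theta>-1$ to bound it below by $c'-2$. This is the actual role of the chunk-size hypothesis: the condition $c'\ge 3$ makes $c'-2\ge 1$, so the cross-boundary resolution is no worse than the intra-chunk one, and hence $\mathcal{E}'_{3d-rpe}\ge 1>L_p/L=\mathcal{E}'_{rope}$. Your reading of $c\ge 3$ as a non-degeneracy condition for intra-chunk offsets is therefore not what drives the argument. One further detail: the paper's PI protocol for 3D-RPE keeps the number of chunks $n=\lceil L_p/c\rceil$ fixed and enlarges each chunk to size $c'=\lceil L/n\rceil$ with integer indices in $[0,c'-1]\subseteq[0,L_p-1]$, rather than keeping the chunk size at $c$ as you assumed; the high-level conclusion is the same, but this explains why the bound is stated in terms of $c'$.
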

The Proof of \textbf{Theorem 1} is provided in Supplementary Materials C. 

To empirically validate the superior performance of this benefit in a training-free setting, it has been observed that methods combining RoPE with interpolation lead to a significant increase in Perplexity as the modeling length increases in language modeling tasks. Conversely, the increase in Perplexity is substantially smaller when employing 3D-RPE with linear interpolation (Refer to Table~\ref{train_free_valid_pg19} in Section~\ref{experiments:all}). This phenomenon indicates that this benefit has led to an improvement in the performance of long sequence language modeling.

\section{Related Work}
\label{related_work}
This section provides an overview of the extensive literature on position encoding in Transformers~\cite{vaswani2017attention} and discusses context extending capabilities based on RoPE.

\textbf{Position Encoding (PE):} PE is important for Transformer-based language models. Earlier studies~\cite{shaw2018selfattention,raffel2020exploring,wang2020encoding,su2024roformer} have focused on enhancing the original absolute position encoding to develop better relative position encoding, thereby improving the text modeling capabilities of language models. 
These works~\cite{shaw2018selfattention,raffel2020exploring,wang2020encoding} utilized trainable position vector encoding to directly incorporate positional information into context representations. 
Although effective, these methods typically add positional information to contextual representations, making them unsuitable for linear self-attention architectures. 
RoFormer~\cite{su2024roformer} introduced relative position information by rotating context representations, known as RoPE. 
Transformers utilizing RoPE have become a prevalent backbone in various LLM designs~\cite{touvron2023llama,chowdhery2022palm,wang2021gptj6b,nijkamp2022codegen}. 
Our proposed 3D-RPE differs from the two-dimensional space of RoPE by modeling the relative position information of tokens through rotation on the Bloch Sphere.

\textbf{Long-context LLMs based on RoPE:} 
To enhance the contextual capabilities of Large Language Models (LLMs) using RoPE, several positional encoding interpolation techniques have been developed.
These include Linear Position Interpolation (LPI)~\cite{chen2023extending}, Neural Tangent Kernel (NTK)~\cite{peng2023ntk}, and Yet Another Recurrent Network (YaRN)~\cite{peng2023yarn} interpolation. 
Position Sequence Tuning (PoSE)~\cite{zhu2023pose} has notably increased sequence lengths to $128k$ by amalgamating these positional interpolation strategies. 
Additionally, LongLora~\cite{longlora} introduced the shift-short attention mechanism, allowing for effective emulation of full attention and extending sequences up to $100k$, leveraging the LLMa-2-7B model and LoRA's fine-tuning approach~\cite{hu2022lora}.
3D-RPE further strengthens the positional relationships between distant tokens by capturing inter-chunk positional information and is compatible with existing fine-tuning techniques like LoRA to bolster long-context representation.
The Dual Chunk Attention (DCA)~\cite{an2024trainingfree} method, which enhances the use of pre-trained integer-based parameters, splits query and key sequences into chunks and uses three specialized matrices to capture the relative positions within and between these chunks. This method enhances the model’s ability to process longer sequences, but it is unable to model the relative positions within distant chunks.
In our work, we employ rotating positional encoding to link attention across different chunks.

\section{Experiments}
\label{experiments:all}
We evaluate the method of position encoding, 3D-RPE, on LLaMA2~\cite{touvron2023llama} models (specifically, LLaMA-2-7B and LLaMA-2-7B-chat), which have a $4k$ pre-training context, and LLaMA-3-8B-Instruct~\footnote{https://github.com/meta-llama/llama3}, which has an $8k$ pre-training context. Our experiments aim to explore the following aspects: 1) The effect of 3D-RPE on long-context generation can be assessed using Perplexity.
2) The impact of 3D-RPE on long-context understanding and generation tasks, can be reflected by the accuracy of long sequence natural language tasks, e.g., multiply documents QA. 
3) Ablation studies to confirm the advantages of 3D-RPE in position interpolation.

\subsection{Experimental Settings}
In this section, we elaborate on the experimental setup by introducing two types of tasks (i.e., long-context language understanding and long sequence language modeling) and detailing three aspects of the configuration (i.e., training parameters, training and evaluation datasets, and baseline models).

\textbf{Training Setting:} 
For long-context Natural Language Understanding (NLU) tasks, we have fine-tuned LLaMA-2-7B-chat and LLaMA-3-8B-Instruct. The context length for these models has been extended from $4k$ to $16k$ and from $8k$ to $16k$, respectively.
The fine-tuning method follows the fine-tuning strategy of LongChat~\cite{longchat2023}. The training step is $3,000$.
For the long-sequence Language Modeling (LM) tasks, we have fine-tuned LLaMA-2-7B to support extended context length of $32k$ tokens. The training step is $1,000$.
We set the per-device batch size as $1$, and gradient accumulation step as $8$, which means that the batch size is $8$. We train the model with the next token prediction objective with LoRA~\cite{hu2022lora}. 

We employed the AdamW optimizer~\cite{AdamW} with ${\beta}_1 = 0.9$ and ${\beta}_2 = 0.95$ for all fine-tuned models. Chunk size is set to $3k$.
The learning rate was set to $2\times 10^{-5}$, and a linear learning rate warmup was applied. Training was conducted on a single 4xA800 GPU machine using FlashAttention-2~\cite{Dao2023}.

\textbf{Datasets:} 
In the context of long-context NLU tasks, we employ the LongAlpaca-12k dataset, which contains 9,000 LongQA and 3,000 short QA entries~\cite{long-alpaca}, and the LongAlpace-16k-length dataset\footnote{https://github.com/dvlab-research/LongLoRA/}.
To evaluate the performance of 3D-RPE for long-context extension, we use the LongBench~\cite{bai2023longbench}, which includes $13$ English tasks, $5$ Chinese tasks and $2$ code tasks, with most tasks having an average context length of $5k$ to $15k$ tokens. We focus on the English and code tasks to evaluate our method, 3D-RPE. 
Additionally, the LEval~\cite{an2023eval} evaluation set, which also consists of long-context datasets, is used to verify the effectiveness of 3D-RPE. The five datasets annotated from scratch in LEval,
namely Coursera, QuALiTY, CodeU, GSM,and TOEFL, are utilized.

For long-sequence LM tasks, we use the RedPajama-Data~\cite{redpajama} for fine-tuning training.
The dataset is a large-scale pre-training dataset (the size reaches 1.2 trillion tokens) designed to provide high-quality training data for language models, and contains multiple data sources (i.e., github, arxiv, book, c4 and Wikipedia, etc.). We sample $20,000$ samples from these data sources for training. For evaluation, we utilize the PG19 book corpus dataset~\cite{rae2020compressive}, which includes 100 documents, and the Arxiv Math Proof-pile dataset (test split). Additionally, all methods evaluate perplexity by using a sliding window following~\cite{2021arXiv210812409P}.

\textbf{Baseline models:} 
For long-context NLU tasks, the fine-tuned models, including LongAlpace-16k~\cite{longlora}, LongChat-32k~\cite{li2023how} LongLlama~\cite{Tworkowski2023FocusedTransformer} and ChatGLM~\cite{du2022glm} are used as the baseline models. 
Models of fine-tuning free in language modeling tasks are also used in long-context NLU tasks. 

In long sequence LM tasks, the methods of LongLoRA~\cite{longlora}, StreamingLLM~\cite{xiao2023streamingllm}, Positional Interpolation(PI)~\cite{chen2023extending} and the NTK-Aware Scale RoPE(NTK)~\cite{peng2023ntk} are selected as the baselines, all based on the LLaMA-2-7B-base model. Among these baseline models, PI, NTK and StreamingLLM are fine-tuning-free methods. The fine-tuned models include LongLoRA and Activation Beacon~\cite{zhang2024soaring}.
In Ablation experiments, interpolation methods without training are used as baseline models, which are PI and NTK.

\begin{table} [t]
\caption{Comparison between open-source based models on long-context NLU tasks.
Our model, 3D-RPE-LlaMA2-7B-Chat is fine-tuning on LLaMA-2-7b-chat, which is extended from $4k$ to $16k$ context lengths.
Baseline models can be categorized into two groups: those that necessitate fine-tuning during training (such as LongAlpaca~\cite{longlora} and LongLLaMA~\cite{Tworkowski2023FocusedTransformer}), and those that do not require it (including PI, NTK, StreamingLLM~\cite{xiao2023streamingllm}, and ChunkLLaMA-$16k$~\cite{an2024trainingfree}).
The experimental results for each specific evaluation set in Supplementary Material D.2.} 
\label{nlu-task-table}
\begin{center}
\setlength{\tabcolsep}{2pt} 
\renewcommand{\arraystretch}{1.1} 
\begin{small}
\begin{sc}
\begin{tabular}{lccccc} \hline
 Methods & \textnormal{Single-Doc QA} & \textnormal{Multi-Doc QA} & \textnormal{Summarization} & \textnormal{Few-shot} & \textnormal{Code}  \\ 
\hline
\textnormal{LLaMA-2-7B-chat} &     24.90    &    22.60       &      24.70      &     60.01  &   48.10    \\ 
\textnormal{LLaMA-2-7B-chat-PI}       & 18.98        &    17.16       &      25.03      &     49.43  &   52.73    \\
\textnormal{LLaMA-2-7B-chat-NTK}             & 23.21        &    23.34       &      24.40      &     59.29  &   49.28    \\ 
\textnormal{StreamingLLM}    & 21.47      &    22.22       &      22.20      &     50.05  &   48.00   \\
% \hline
\textnormal{ChunkLLaMA-$16k$}    &    24.04     &    22.98          &      21.52         &     46.31     &   49.73    \\
\hline
\textnormal{LongChat-$32k$}   &    31.58   &    23.50       &      26.70      &     64.02  &    54.10 \\
\textnormal{LongAlpaca-$16k$}  &   28.70   &    28.10       &      27.80      &     63.70  &    56.00 \\
\textnormal{LongLLaMA}      &   30.12   &    16.37       &      24.19      &     60.31  &    66.05 \\ 
\textnormal{Vicuna-v1.5-7B-$16k$} & 28.01 & 18.63 & 26.01 & 66.20 & 47.30  \\
\textnormal{ChatGLM3-6B-$32k$} & 40.30 & 46.60 & \textbf{29.50} & 68.10  & 56.20 \\
\hline
\textnormal{3D-RPE-LLaMA2-7B-Chat} &  \textbf{47.40}  &  \textbf{60.10}  &  28.99  &    \textbf{73.16} &  \textbf{76.50}  \\
\hline
\end{tabular}
\end{sc}
\end{small}
\end{center}
\vskip -0.1in
\end{table}

\begin{table}[t]
\caption{Comparison with open-source models, LLaMA-2-7B-chat, LLaMA3-8B-Instruct, on 5 closed-ended-ended tasks with various input length from LEval~\cite{an2023eval}. The evaluation metric “EM,” which represents the exact match score, is adopted.
* indicates the model is train-free.}
\label{longeval:leval}
\begin{center}
\begin{sc}
\begin{tabular}{lccccccc} \hline
\toprule
{Models} & \textnormal{Coursera} &  \textnormal{QuALiTY} & \textnormal{CodeU} & \textnormal{GSM} & \textnormal{TOEFL} \\
\midrule 
\textnormal{LLaMA-2-7B-Chat} & 29.21 &  37.62 & 1.11 & 19.00 & 51.67 \\
\textnormal{LongChat-7B-16K} & 29.74 &  33.66 & 3.33 & 10.00 & 47.95 \\
\textnormal{LLaMA2-7B-NTK}  &  32.71 &  33.16 & 0.00 & 19.00 & 52.78  \\
\textnormal{Vicuna1.5-7B-16k} &  38.66 & \textbf{39.60} & \textbf{5.55} &  19.00 & 55.39 \\
\textnormal{3D-RPE-LLaMA2-7B-Chat(ours)} & \textbf{39.38} & 38.11 & 2.22 & \textbf{21.01}  &  \textbf{57.99} \\
\hline 
\textnormal{LLaMA3-8B-Instruct*}  & 51.45 & \textbf{64.34} & 4.44 & 76.00  & 82.89 \\
\textnormal{3D-RPE-LLaMA3-8B-Instruct*} & \textbf{51.89} & 61.38 & 4.44 & \textbf{80.00} & 82.89 \\
% 3D-RPE-LlaMA3-8B-Instruc(ours) & \textbf{51.89} & 59.40 & \textbf{6.66} & 80.00 & 82.52 \\
\bottomrule
\end{tabular}
\end{sc}
\end{center}
\vskip -0.2in
\end{table}

\begin{table}[t]
\caption{Perplexity evaluation on different extending methods. We conduct evaluation on the Proof-pile and PG-19 test datasets, varying evaluation context window size from $8k$ to $100k$.}
\label{lm-task-table}
\begin{center}
\renewcommand{\arraystretch}{1.1} 
\begin{small}
\begin{sc}
\begin{tabular}{lcccccccc} \hline
\multicolumn{1}{c}{\multirow{2}{*}{Methods}} & \multicolumn{4}{c}{\textnormal{PG-19}} & \multicolumn{4}{c}{\textnormal{Proof-Pile}} \\ \cline{2-9} 
& $8k$  & $16k$ & $32k$ & $100k$ &  $8k$ & $16k$ & $32k$ & $100k$ \\ 
\midrule
\textnormal{LLaMA2-7B-Base} & 131.09 & $>10^2$ & $>10^2$ & OOM  & 16.79 &$>10^2$ & $>10^2$  & OOM \\
\textnormal{LLama2-7B-PI}   & 11.32  & 19.5    &$>10^2$  & OOM    &  3.86  & 5.94   & 33.7    & OOM  \\
\textnormal{LLama2-7B-NTK}        & 10.28  & 11.5    & 37.8    & OOM  &  3.98  & 5.94    & 33.7   & OOM  \\ 
\textnormal{StreamingLLM} & 9.23   & 9.25    & 9.24    & 9.32  & 3.47  & 3.51   & 3.50    & 3.55 \\ 
% Chunkllama-7B  & 7.90   & 9.25    & 9.24    & 9.32  & 3.47  & 3.51   & 3.50    & 3.55 \\
\hline 
% YaRN-128K      & 6.68 &  --  & 6.44   & 6.38    &OOM    & 2.70 &  --   & 2.47  & 2.41 & OOM \\ 
% \textnormal{Full-Length}     & \textbf{6.98} & OOM    & OOM     &OOM     & \textbf{2.66}  & OOM   &  OOM  & OOM   \\ 
\textnormal{LongLoRA-32k}  & 7.33   & 7.16    & \textbf{7.04}  & {--}  & 2.78 & 2.61 & \textbf{2.50} & {--}  \\
\textnormal{LongLoRA-100k}  & 7.57   & 7.33    & 7.16  & \textbf{7.04} & 2.78 & \textbf{2.60} & 2.58 & \textbf{2.52}  \\
\textnormal{LongChat-32k} & 8.92 &  8.85  &  8.81 & OOM & 2.98 & 2.70 & 2.65 & OOM \\ 
\textnormal{Activation Beacon} & 8.52 &  8.54  &  8.56 & 8.68 & 3.45 & 3.42 & 3.39 & 3.35 \\ 
\hline
% 3D-RPE(Training-Free-50) & 6.48  &  6.06  & 5.74  & 7.26   & 2.76  &  2.71  & 2.78  & 3.05 \\
\textnormal{3D-RPE-LLaMA2-7B} &  \textbf{7.03}  & \textbf{7.10}  & 8.09 & 8.12   & \textbf{2.72}  & 2.93  & 2.89  & 3.05 \\
\hline
\end{tabular}
\end{sc}
\end{small}
\end{center}
\vskip -0.1in
\end{table}

\begin{table}[t]
\caption{Results are evaluated in Perplexity on PG19 validation split. '*' denotes train-free, implementing 3D-RPE directly on the LLaMA2-7B-Base model without additional fine-tuning, utilizing a chunk size of $3k$.  The context length of $8k$ is extended directly with 3D-RPE. Achieving $16k$ and $32k$ is accomplished through linear positional interpolation with chunks based on the $8k$ context length.
}
\label{train_free_valid_pg19}
\begin{center}
\begin{sc}
\begin{tabular}{lcccccc} \hline
\toprule
{Models} & $4k$ & $8k$ & $16k$ & $32k$ \\
\midrule 
% \textnormal{LLaMA2-ReRoPE}   & 7.87   &  \textbf{7.75}    & OOM   & OOM     \\
\textnormal{LLaMA2-7B-PI}       & 7.94   &  9.19    & 15.11  & $>10^2$ \\
\textnormal{LLaMA2-7B-NTK}      & 7.87   &  11.98   & 26.12  & 58.91  \\
\textnormal{LLaMA2-7B-Yarn}     & 7.87   &  8.06    & 9.82   & 11.74  \\
\midrule  
\textnormal{3D-RPE-LLaMA2-7B*} & 7.87   & \textbf{7.90}  & \textbf{7.71}  & \textbf{9.34}  \\
\bottomrule
\end{tabular}
\end{sc}
\end{center}
\vskip -0.2in
\end{table}

\subsection{Long-Context Natural Language Understanding}
\label{experiments:nlu}
In this task, the LongBench~\cite{bai2023longbench} evaluation set was initially utilized. Five categories of tasks were included: single-document QA (3 tasks), multi-document QA (3 tasks), summarization (3 tasks), few-shot learning (3 tasks), and code completion (2 tasks). The average score for each type is reported in Table~\ref{nlu-task-table}. 
The evaluation metrics followed those specified in LongBench~\cite{bai2023longbench}, which differ across tasks and are detailed in Supplementary Material D.1.
The results in Table~\ref{nlu-task-table} highlight our model's significant performance advantages over baseline models in four tasks, both for models without training and those with fine-tuning. In summarization tasks, our model also achieved performance comparable to ChatGLM3-6B-$32k$. These experimental outcomes indicate that our model enhances the correlation between tokens with distant relative positions in long contexts through 3D-RPE, resulting in improved performance.

Subsequently, the LEval Benchmark~\cite{an2023eval} was employed. Table~\ref{longeval:leval} reveals that our model, 3D-RPE-LLaMA2-7B-Chat, outperformed LLaMA2-7B-NTK and LongChat-7B-$16K$. Although it did not surpass Vicuna1.5-7B-$16K$ in Quality and CodeU tasks, it excelled in the Coursera, GSM, and TOEFL tasks. 
Additionally, we conducted experiments on LLaMA3-8B-Instruct using a 16k context window with 3D-RPE. The 3D-RPE-LLaMA3-8B-Instruct* showed performance improvements in the Coursera and GSM tasks. While 3D-RPE did not enhance performance in the CodeU, TOEFL, and QuALiTY tasks, there was no significant performance decline either. These experimental results demonstrate the effectiveness of the 3D-RPE method.

\subsection{Long-Sequence Language Modeling}
In Table~\ref{lm-task-table}, we present the perplexity scores for our model, 3D-RPE-LLaMA-2-7B and baseline models on the proof-pile and PG19 test datasets. 3D-RPE-LLaMA-2-7B was fine-tuned from the LLaMA2-7B-Base model using a dataset with a $32k$ context window.
To evaluate performance, we set sequence lengths of $8k$, $16k$, and $32k$. We extended our model's sequence length from $32k$ to $100k$ using the position extending method from PoSE~\cite{zhu2023pose}. 
The results indicate that our method outperforms train-free sequence extending models.
Compared to fine-tuned models, our model shows better performance at $8k$ and $16k$ sequence lengths. 
This suggests that the new positional encoding, 3D-RPE, improves or maintains modeling performance for larger context windows ($32k$) compared to smaller ones ($8k$ and $16k$). 
For the $32k$ and $100k$ tasks, although our model did not surpass LongLoRA-$32k$ and LongLoRA-$100k$, it did outperform LongChat-$32k$ and Activation Beacon. 
 
Notably, our model can further extend from $32k$ to $100k$ without significantly increasing perplexity values, in combination with other train-free extension methods. However, due to its specific attention mechanism, the LongLoRA models cannot be extended beyond their predefined context windows in a train-free manner. For instance, LongLoRA-$32k$ cannot be further extended to $100k$.

\subsection{Ablation Study}
\label{ablation:experiments}
In this section, we conduct ablation studies in this section to explore how 3D-RPE affects the linear interpolation method. We compare position interpolation methods (PI, NTK, and Yarn) with the method that combines 3D-RPE with position interpolation on the LLaMA-2-7B-Base model in a train-free manner. The experimental results can be found in Table~\ref{longeval:leval}. The 3D-RPE-LLaMA2-7B* model with linearly positional interpolation from $8k$ to $16k$ and $32k$, the 3D-RPE approach yields improved results by mitigating the decrease in positional resolution caused by interpolation methods. These results are consistent with the findings of Theorem 1 in Section~\ref{sencond:benefits} presented in this paper.

\section{Conclusion and Future Work}
\label{couclusion}
In this paper, we present a novel rotary position encoding method called 3D Rotary Position Encoding (3D-RPE). Compared to RoPE, we have theoretically proved that 3D-RPE possesses two key advantages: controllable long-term decay and enhanced interpolation resolution. Experimentally, 3D-RPE has demonstrated outstanding performance in long-context Natural Language Understanding. 

In the future, 3D-RPE holds promise as a foundational positional encoding strategy for LLMs, especially in the aspect of modeling long contexts. 
Moreover, given that 3D-RPE encapsulates positional encoding within a three-dimensional framework, it has the potential to integrate with visual data, thereby facilitating an in-depth exploration of its efficacy in synchronizing graphical and textual semantic information.

\medskip

\small
\bibliography{neurips_2024}
\bibliographystyle{plain}

\appendix
\section{Bloch Sphere}
\label{appendix-bs}
% \subsection{Summary of 3D-RPE}
% We illustrate the difference between RoPE and 3D-RPE briefly in Figure~\ref{fig:correlation}.
% \begin{figure}[h]
% \vskip 0.1in
% \small
% \begin{center}
% \includegraphics[width=\columnwidth]{correlation-chart.pdf}
% \caption{2D Rotary Position Encoding (RoPE) vs. 3D Rotary Position Encoding (3D-RPE).}
% \label{fig:correlation}
% \end{center}
% \vskip -0.2in
% \end{figure}

\begin{figure}[ht]
\vskip 0.1in
\begin{center}
\includegraphics[width=\columnwidth]{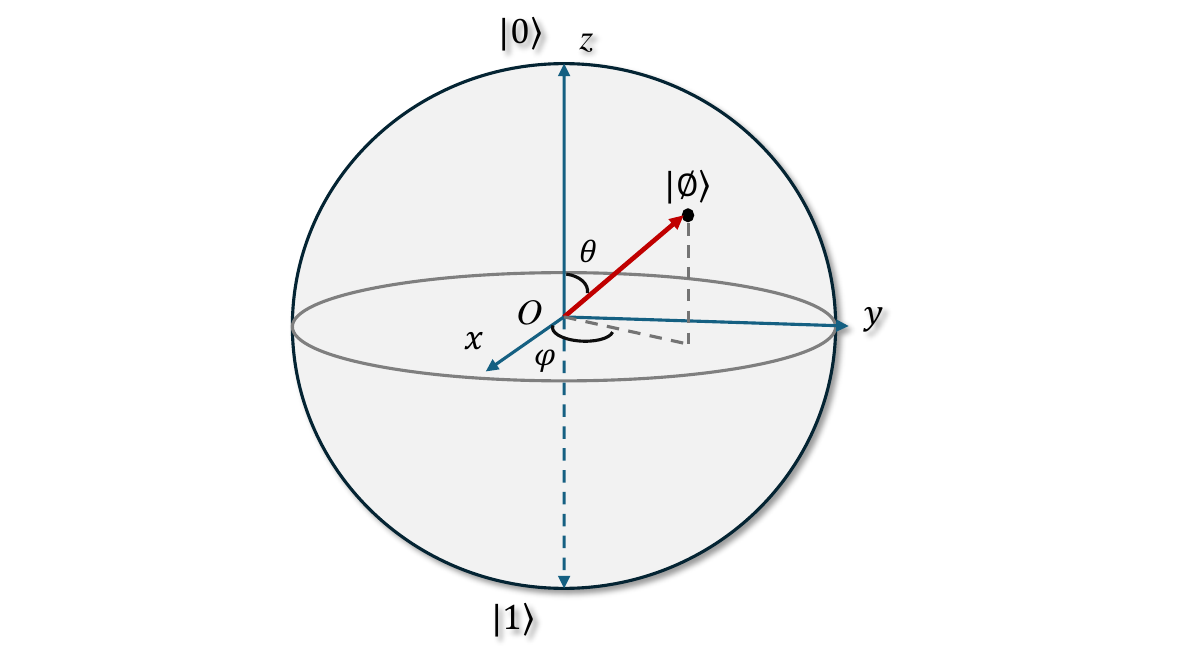}
\caption{A diagram of Bloch Sphere.}
\label{fig:diagram}
\end{center}
\vskip -0.1in
\end{figure}
% \subsection{Bloch Sphere}
\textbf{Bloch Sphere}:
3D Rotary Position Encoding (3D-RPE), proposed by us, corresponds to a Bloch Sphere. In this section, we mainly introduce the basic concept of Bloch Sphere, which corresponds to  Eq.~(1) in this paper. 

The Bloch Sphere is a geometric tool to used to represent qubits, typically depicted in a three-dimensional polar coordinate system as a point on the Sphere (see Figure~\ref{fig:diagram}). A single quantum state is represented by the following equation in linear algebra:

\begin{equation}
\label{single_quatum_state}
\ket{\phi}=\alpha
\ket{0}+\beta\ket{1}
\end{equation}

where $\alpha$ and $\beta$ are complex numbers, i.e., $\alpha,\beta\in\mathbb{C}$. According to Euler's formula in complex analysis, the coefficients $\alpha$ and $\beta$ can be reexpressed as:

\begin{equation}
\label{appendix:eq2}
\begin{aligned}
\alpha = a + b\mathrm{i} = r_0(cos(\theta_\alpha)+\mathrm{i}sin(\theta_\alpha))=r_0e^{\mathrm{i}\theta_\alpha}\\
\beta = c + d\mathrm{i} = r_1(cos(\theta_\beta)+\mathrm{i}sin(\theta_\beta))=r_1e^{\mathrm{i}\theta_\beta}
\end{aligned}
\end{equation}
By substituting Eq.~(\ref{appendix:eq2}) into Eq.~(\ref{single_quatum_state}),
the quantum state representation is denoted as:

\begin{equation}
\label{process-of-bs}
\begin{aligned}
\ket{\phi}&=r_0(cos(\theta_\alpha)+\mathrm{i}sin(\theta_\alpha))\ket{0}+r_1(cos(\theta_\beta)+\mathrm{i}sin(\theta_\beta))\ket{1}\\
&=r_0e^{\mathrm{i}\theta_\alpha}\ket{0}+r_1e^{\mathrm{i}\theta_\beta}\ket{1}\\
&=e^{\mathrm{i}\theta_\alpha}(r_0\ket{0}+r_1e^{\mathrm{i}(\theta_\beta-\theta_\alpha)}\ket{1})
\end{aligned}
\end{equation}

$\theta_\alpha$ is the global phase.

Considering the normalization condition $|\alpha|^2+|\beta|^2=1$, we have:

\begin{equation}
|r_0|^2 + |r_1e^{\mathrm{i}(\theta_\beta-\theta_\alpha)}|^2 = r_0^2 + r_1^2|e^{\mathrm{i}(\theta_\beta-\theta_\alpha)}|^2=r_0^2+r_1^2=1
\end{equation}

Given $r_0=cos\frac{\varphi}{2}$, $r_1=sin\frac{\varphi}{2}$, $\theta_\alpha=\theta$ and $\theta_\beta-\theta_\alpha=\theta_1$,
the state $\ket{\phi}$ can be expressed as:

\begin{equation}
\ket{\phi}=e^{\mathrm{i}\theta}(cos\frac{\varphi}{2}\ket{0}+sin\frac{\varphi}{2}e^{\mathrm{i}\theta_1}\ket{1})
\end{equation}

Therefore, the Eq.~(1) of this paper is given out. 

To adapt to the original 2D rotation position encoding (RoPE) of pre-trained LLMs, such as LLaMA models, the global phase $\theta$ is used to model the relative positions between tokens within a chunk, while the rotation angle $\frac{\varphi}{2}$ is used to model the relative positions between tokens across chunks.

% So we get the function as follow, 
% \begin{equation}
% \begin{aligned}
% \ket{\phi}=e^{\mathrm{i}\theta}(\cos{\frac{\varphi}{2}}\ket{0}+\sin{\frac{\varphi}{2}}e^{i\theta_1}\ket{1})
% \end{aligned}
% \label{bs-equ}
% \end{equation}

\section{Supplementary Material for the Method Section}
\label{appendix:method}
In this section, we mainly introduce the specific implementation of our positional encoding method (3D-RPE), and the formula derivation details of attention score calculation (Eq. (5) of this paper) not detailed in this paper.
\subsection{Implement of 3D-RPE}
\label{appendix-cpe}
In Section 3.1,  we give the general form of 3D Rotary Position Encoding (3D-RPE):
\begin{equation*}
\label{eq:appendix-cpe-general}
\widetilde{\bm{h}}_{j,m} = e^{\mathrm{i}m\theta}(\cos{\varphi_j}\bm{h}_{j, m}^{\perp} + \sin{\varphi_j}\bm{h}_{j, m})
\end{equation*} 
$\cos{\varphi_j}$ and $\sin{\varphi_j}$ are scalar quantities in $\mathbb{R}$. $\bm{h}_{j, m}^{\perp}$ and $\bm{h}_{j, m}$ are shown below:
\begin{equation}
\bm{h}_{j,m}=
\begin{bmatrix}
    h^0 \\
    h^1 \\
    \vdots \\
    h^{d/2-2} \\
    h^{d/2-1} \\
    h^{d/2} \\
    h^{d/2+1}\\
    \vdots \\
    h^{d-2} \\
    h^{d-1}\\
\end{bmatrix}
\quad\quad\quad\quad\quad\quad
\bm{h}_{j,m}^\perp=
\begin{bmatrix}
    -h^{d/2} \\
    -h^{d/2+1} \\
    \vdots \\
    -h^{d-2} \\
    -h^{d-1} \\
    h^0 \\
    h^{1}\\
    \vdots \\
    h^{d/2-2}\\
    h^{d/2-1}\\
\end{bmatrix}
\end{equation}
In the concrete implementation, analogous to RoPE, for each two-dimensional subspace $\mathbb{R}^2$ of $\mathbb{R}^{d}$, we assign angles $\theta_l=base^{-2l/d}$ that vary from high to low frequencies. An equivalent rotation matrix $\mathcal{R}_m^\theta$ is utilized to substitute for $e^{\mathrm{i}m\theta}$:
\begin{equation}
\label{equ:R^d_theta}
\mathcal{R}^\theta_m={
\left[ 
\begin{array}{cccccccc}
\cos m\theta_0 & 0 & \cdots &0 & -\sin m\theta_0 & 0 & \cdots & 0  \\
0 & \cos m\theta_1 & \cdots &0 &0   & -\sin m\theta_1 & \cdots & 0  \\
\vdots & \vdots & \ddots  &\vdots & \vdots &\vdots  & \ddots & \vdots \\
0 &0 &\cdots &\cos m\theta_{d/2-1}  &0 &0 &\cdots & -\sin m\theta_{d/2-1}\\
\sin m\theta_0 & 0 & \cdots &0 & \cos m\theta_0 & 0 & \cdots & 0 \\
0 & \sin m\theta_1 & \cdots &0 &0   & \cos m\theta_1 & \cdots & 0  \\
\vdots & \vdots & \ddots  &\vdots & \vdots &\vdots  & \ddots & \vdots \\
0 &0 &\cdots &\sin m\theta_{d/2-1}  &0 &0 &\cdots & \cos m\theta_{d/2-1}\\
\end{array} 
\right ]}
\end{equation}
Therefore, Eq.(4) of this paper can be transformed to 
\begin{equation*}
\widetilde{\bm{h}}_{j,m} = \mathcal{R}^\theta_m(\cos{\varphi_j}\bm{h}_{j, m}^{\perp} + \sin{\varphi_j}\bm{h}_{j, m})
\end{equation*} 
where $\mathcal{R}^\theta_m$ is a design form equivalent to the rotation matrix in RoPE, mainly re-mapped to correspond to specific application implementations and calculation derivations in LLMs.
In the specific implementation, after the rotary position encoding of LLMs, the long sequence is chunked based on the chunk size $c$. Then, the rotation $\varphi_j$ is set on each chunk, $j$ is the position of chunk.

\subsection{Derivation of Attention for 3D-RPE}
The formula derivation details of attention score calculation(Eq.~(5)) is as follows. 

Since $\bm{h}^\perp=e^{\mathrm{i}\frac{\pi}{2}}\bm{h}=\mathrm{i}\bm{h}$, we could get:
\begin{equation}
\label{cpe_method}
    \begin{aligned}
        \widetilde{\bm{h}}_{j,m} &= e^{\mathrm{i}m\theta}(\mathrm{i}\cos{\varphi_j}\bm{h}_{j, m} + \sin{\varphi_j}\bm{h}_{j, m})\\
        &=e^{\mathrm{i}m\theta}(\mathrm{i}\sin{(\frac{\pi}{2}-\varphi_j})\bm{h}_{j, m} + \cos{(\frac{\pi}{2}-\varphi_j})\bm{h}_{j, m})\\
        &=e^{\mathrm{i}m\theta}e^{\mathrm{i}\frac{\pi}{2}-\varphi_j}\bm{h}_{j, m}
    \end{aligned}
\end{equation}
Let $\bm{q}_{i,m}$=3d-PE$(\bm{q},i,m)$, $\bm{k}_{j,n}$=3d-PE$(\bm{k},j,n)$. 
Taking the real part of the inner product of $\bm{q}_{i,m}$ and  $\bm{k}_{j,n}$ yields:
\begin{equation}
% \label{eq:general-cpe-attn}
s(\bm{q}_{i,m},\bm{k}_{j,n})=Re[e^{\mathrm{i}(\varphi_i-\varphi_j)}\sum\limits_{l=0}^{d/2-1}e^{\mathrm{i} (m-n)\theta_l}(\bm{q}_{l}\bm{k}_{l}+\bm{q}_{d/2+l}\bm{k}_{d/2+l})]
\end{equation}
which is a function related to both $m-n$ and $\varphi_i-\varphi_j$.

\section{3D Rotary Position Encoding Resolution Enhancement}
% Resolution, also known as image resolution, refers to the level of detail captured or displayed in an image. 
In this section, before proving \textbf{Theorem 1}, we first provide the definitions of positional resolution for RoPE, as well as the positional resolution after positional interpolation.
% In this section, we first define resolution for both RoPE and 3D-RPE as the absolute value of coefficient of angles in attention score for two adjacent tokens, which indicates the extent of positional difference. 
% The Positional Interpolation Resolution is defined below.
\begin{definition}[\textbf{Positional Interpolation Resolution}]
    Let $\bm{q}_{m+1}$ and $\bm{k}_{m}$ be query state and key state of the $m$-th and ${(m+1)}$-{th} hidden states after RoPE. Given a pre-training length $L_p$, the attention score $a$ is:
\begin{equation}
    a(\bm{q}_{m+1}, \bm{k}_{m}) = \bm{q} \bm{k}^{T} e^{\mathrm{i}\theta}
\end{equation}
The Resolution $\mathcal{E}_{rope}$ corresponding to the initial length $L_p$ is $\mathcal{E}_{rope}=1$. After employing linear interpolation with length $L\geq L_p$, the attention score is:
\begin{equation}
    a(\bm{q}_{m+1}, \bm{k}_m) = \bm{q} \bm{k}^{T} e^{\mathrm{i}\frac{L_p}{L}\theta}
\end{equation}
Note that the Resolution $\mathcal{E}_{rope}$ turns to $\mathcal{E}'_{rope}=L_p/L\leq1$ and decreases as $L$ increases.
\end{definition}
As the resolution decreases, the magnitude of the rotation of attention score becomes smaller, reflecting the extent of positional difference becomes smaller.
% \begin{definition}
%     Let $\bm{q}_{m+1}$ and $\bm{k}_{m}$ be query state and key state of the $m$-th and $(m+1)$-th hidden states after 3D-RPE. Given a pre-training length $L$, the attention score $a$ is:
% \begin{align*}
% a_{3d-rpe} &= \bm{q}\bm{k}^{T} e^{\mathrm{i}{}\theta}e^{\mathrm{i}(\varphi_i - \varphi_j)} \\
%  &= \bm{q}\bm{k}^{T}e^{\mathrm{i}[\theta - (\varphi_i + \varphi_j)]}
% \end{align*}
% The $\mathcal{E}_{3d-rpe}$ can be further calculated as follows,
% \begin{equation*}
%     \mathcal{E}_{3d-rpe}^\prime = [1 + t(i,j)]\theta
% \end{equation*}
% \end{definition}
Now we give the following theorem, explaining how 3D-RPE mitigates the resolution decreasing in detail.
\begin{theorem}
[\textbf{Chunk Position Encoding Resolution Enhancement}]
For a pre-trained language model with a pre-training length $L_p$ and an extension length requirement of $L$,  employing linear position interpolation extension methods $\mathcal{I}$ based on Rotary Position Encoding (RoPE) can elevate the relative positional resolution from $\mathcal{E}_{rope}$ to $\mathcal{E}_{rope}^\prime$, Let $\mathcal{E}_{3d-rpe}^\prime$ denote the relative positional encoding resolution achieved by the method $\mathcal{I}$ based on 3D-RPE, with chunk size $c>=3$, there is: 
\begin{equation}
% \label{cpe:resolution}
   \mathcal{E}_{3d-rpe}^\prime  > \mathcal{E}_{rope}^\prime
\end{equation}
\end{theorem}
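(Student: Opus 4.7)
The plan is to extend the scalar resolution definition from RoPE to the two-component encoding of 3D-RPE, and then show that applying the linear PI method $\mathcal{I}$ to 3D-RPE compresses the intra-chunk and inter-chunk rotations less aggressively than it compresses a monolithic RoPE rotation. First I would recall from the preliminary definition that $\mathcal{E}_{rope}' = L_p/L$, since linear PI rescales the single rotation angle $\theta$ between adjacent positions by the factor $L_p/L$. Then, using Eq.~(5), I would attach to 3D-RPE a natural resolution notion: the minimum normalized rotation angle between the attention scores of two absolute positions differing by one, so that for pure RoPE this recovers $\mathcal{E}_{rope}$.

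Second, I would analyze how $\mathcal{I}$ acts on 3D-RPE. Because the intra-chunk index $m$ only runs over $[0,c-1]$ with $c \leq L_p$, the rotation $e^{\mathrm{i}m\theta_l}$ never leaves the pre-trained interval, so PI leaves it untouched and its resolution contribution stays at $1$. Only the inter-chunk factor $e^{\mathrm{i}(\varphi_i-\varphi_j)}$ needs to be rescaled, and it is rescaled over a space of size $\lceil L/c \rceil$ rather than $L$, so the compression ratio seen by the chunk index is at worst $\lceil L_p/c\rceil / \lceil L/c\rceil$, which is strictly larger than $L_p/L$ for $c \geq 2$. This already gives $\mathcal{E}_{3d-rpe}' > L_p/L$ for adjacent absolute positions lying in the same chunk.

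Third, I would handle the only non-trivial case, namely adjacent absolute positions that straddle a chunk boundary, i.e.\ pairs of the form $(j-1,c-1)$ and $(j,0)$. Here the inter-chunk index advances by one, contributing $\varphi_j-\varphi_{j-1}$, while the intra-chunk index resets from $c-1$ down to $0$, so the two rotations partially cancel. I would bound this net rotation from below by isolating a single frequency $\theta_l$, invoking the hypothesis $c \geq 3$ to guarantee that the residual intra-chunk contribution does not annihilate the inter-chunk contribution, and then compare the resulting quantity against $(L_p/L)\theta$ to conclude $\mathcal{E}_{3d-rpe}' > \mathcal{E}_{rope}'$.

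The main obstacle will be exactly this chunk-boundary case: one has to quantify how the geometrically small inter-chunk rotation $\varphi_j-\varphi_{j-1}$, where $\varphi_j=\mathrm{base}^{-j}$, survives after being added to the reset of the intra-chunk index, and do so \emph{after} PI has been applied. The role of the hypothesis $c \geq 3$ is precisely to rule out pathological near-cancellation in this inequality; deriving the correct lower bound in terms of $L_p$, $L$, $c$, and $\mathrm{base}$, and verifying that it strictly exceeds $L_p/L$ uniformly in the chunk position $j$, is where the bulk of the technical work will go.
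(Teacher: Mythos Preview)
Your two–case split (adjacent tokens inside a chunk versus adjacent tokens straddling a chunk boundary) is exactly the decomposition the paper uses, and you are right that the chunk–size hypothesis is what controls the boundary case. But you misidentify how the interpolation method $\mathcal{I}$ acts on 3D-RPE, and this leads you to over-engineer the rest of the argument.

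In the paper, $\mathcal{I}$ does \emph{not} rescale the inter-chunk angle. Instead, the number of chunks $n=\lceil L_p/c\rceil$ is kept fixed and the extra $L-L_p$ tokens are spread evenly over those $n$ chunks, so the chunk size grows from $c$ to $c'=\lceil L/n\rceil\le L_p$. Because $c'\le L_p$, the intra-chunk index is still inside the pre-trained range and needs no rescaling; and because $n$ is unchanged, the inter-chunk angles $\varphi_j$ are untouched as well. Nothing is compressed. The same-chunk resolution is therefore exactly $1$, not ``at worst $\lceil L_p/c\rceil/\lceil L/c\rceil$'' (a ratio which, incidentally, is not strictly larger than $L_p/L$ in general: take $L_p=4$, $L=8$, $c=2$).

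With this in hand the boundary case is a two-line computation rather than ``the bulk of the technical work.'' For the adjacent pair $\bm q_{(i+1,0)}$, $\bm k_{(i,c'-1)}$ the net phase, in units of $\theta$, is $c'-1+(\varphi_{i+1}-\varphi_i)/\theta$. Since $\varphi_j=\mathrm{base}^{-j}$ one has $(\varphi_{i+1}-\varphi_i)/\theta\ge -1/\mathrm{base}>-1$, so the resolution exceeds $c'-2$. The hypothesis $c\ge 3$ (hence $c'\ge 3$) then gives $\mathcal{E}_{3d\text{-}rpe}'\ge 1>L_p/L=\mathcal{E}_{rope}'$. There is no delicate near-cancellation to quantify: the inter-chunk perturbation is $O(1/\mathrm{base})$ and is simply absorbed by the slack $c'-2\ge 1$.
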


\begin{proof}
% Given the pre-training length L for the position encoding based on RoPE, by using the linear interpolation method, the length is extended to $L'$, where $L'> L$. In this case, the position encoding of the corresponding tokens can be redefined as:
% \label{pi:position_encoding}
% \begin{align*}
% \label{pi:position_encoding}
%     f(\bm{q}, m) = \bm{q}e^{\texttt{i}(\frac{mL}{L'})\theta} \\
%     f(\bm{k}, n) = \bm{k}e^{\texttt{i}(\frac{nL}{L'})\theta} \\
% \end{align*}
% Next, the attention score can be write as
% \begin{equation}
% \label{pi:atten}
% a(\bm{q}_m, \bm{k}_n) = \bm{q} \bm{k}^{T} e^{\texttt{i}\frac{(m-n)L}{L'}\theta}
% \end{equation}
% The resolution of relative position is changed. The original resolution of relative position is $(m-n)$, which can be denoted as $\mathcal{E}_{rope}$. After interpolation, i.e., $\mathcal{I}$, the resolution of relative position is  $\frac{(m-n)L}{L'}$, which can be denoted as $\mathcal{E}_{rope}^\prime$. As the number of interpolations increases(i.e., the value of $L'$ increases), the resolution $\frac{(m-n)L}{L'}$ will become smaller and smaller.\\ 
For 3D-RPE, let the chunk size and chunk number be denoted as $c$ and $n=\lceil L_p/c\rceil$ respectively. Prior to interpolation, the indices within a chunk range from $[0,1,\cdots,c-1]$. Linear interpolation involves evenly distributing the excess $L-L_p$ tokens across $n$ chunks. This results in new indices within the chunk, range from $[0,1,2,\cdots, c'-1]$, where $c'=\lceil L/n\rceil\leq L_p$. So the attention score of $\bm{q}_{i,m+1}$ and $\bm{k}_{i,m}$ based on 3D-RPE after interpolation is:
% rewritten referenced Eq~(\ref{cpe_method}),
% \begin{align*}
% f(q,m, i) = \bm{q}e^{\texttt{i}\frac{mL}{L'}\theta}e^{\texttt{i}\varphi_{i}} \\ 
% f(k,n,j) = \bm{k}e^{\texttt{i}\frac{nL}{L'}\theta}e^{\texttt{i}\varphi_{j}}
% \end{align*}
% The attention score for 3D-RPE is denoted as follow,
% % \begin{equation*}
\begin{align*}
a_{3d-rpe} &= \bm{q}\bm{k}^{T} e^{\mathrm{i}\theta}e^{\mathrm{i}(\varphi_i - \varphi_i)} \\
 &= \bm{q}\bm{k}^{T}e^{\mathrm{i}\theta }
\end{align*}
% \end{equation*}
The resolution of relative position for 3D-RPE is:
% ${(m-n)}\theta + (\varphi_i - \varphi_j)$,
% which is denoted as $\mathcal{E}_{3d-rpe}^\prime$. The $\mathcal{E}_{3d-rpe}^\prime$ can be further calculated as follows,
\begin{equation*}
    \mathcal{E}_{3d-rpe}^\prime = 1
\end{equation*}
For special cases $\bm{q}_{(i+1, 0)}$ and $\bm{k}_{(i, c' - 1)}$:
\begin{equation}
\mathcal{E}_{3d-rpe}^\prime\geq c' - 1 + \frac{(\varphi_{i+1}-\varphi_i)}{\theta}> c' - 2 \geq 1
\end{equation}
% where $t(i,j)=\frac{\varphi_i - \varphi_j}{\theta}$,
where $(\varphi_{i+1}-\varphi_i)/\theta \geq -1/10000 > -1$. As long as $c'\geq 3$, there is $\mathcal{E}_{3d-rpe}^\prime\geq1>\mathcal{E}_{rope}^\prime=L_p/L$. Under normal case, the chunk size $c$ is not set to a very small number, hence $c'\geq3$ is certainly established; moreover, for different interpolation lengths $L$, we need to configure a varying number of chunks $n$, such that $c'=\lceil L/n\rceil\leq L_p$.
% and $\varphi_i = 10000^{-i}$, $\varphi_j = 10000^{-j}$. In that, the recursive encoding property of the generative model, $i$ is generally greater than $j$. Therefore, $\varphi_i < \varphi_j$, then $-1\leq t(i,j)<0$.
% Next, the following inequality holds, 
% \begin{equation}
% \mathcal{E}_{3d-rpe}^\prime\geq[{(m-n)} - 1 ]\theta
% \end{equation}
% When the position indices of $\bm{q}_m$ and $\bm{k}_n$ are within the same chunk, i.e., $i$ equals $j$, then, 
% \begin{equation*}
% \mathcal{E}_{3d-rpe}^\prime = \mathcal{E}_{rope}^\prime
% \end{equation*}
% Therefore, Eq~(\ref{cpe:resolution}) is proved, which means that 3D-RPE can alleviate the issue of reduced positional resolution caused by linear position interpolation method $\mathcal{I}$ based on RoPE. 
\end{proof}

\section{Experimental Supplementary Materials}
\label{Experience:Detail}
\subsection{Evaluation Metrics}
This section mainly presents the utilization of evaluation metrics for a total of 16 tasks from the LongBench.
\begin{table}[h] 
\centering
\begin{tabular}{|c|c|} 
\hline \textbf{Dataset} & \textbf{Metric} \\
\hline 
Narrative QA & \textnormal{F1\_Score} \\
Qsper & \textnormal{F1\_Score} \\
MultiFieldQA-En & \textnormal{F1\_Score} \\
Hotpot QA & \textnormal{F1\_Score}\\
2WikiM QA & \textnormal{F1\_Score}\\ 
Musique & \textnormal{F1\_Score} \\ 
GovReport& \textnormal{Rouge\_Score} \\
QMSum & \textnormal{Rouge\_Score} \\
MultiNews & \textnormal{Rouge\_Score} \\
Trec & \textnormal{Classification\_Score} \\
Trivia QA & \textnormal{F1\_Score} \\
SAMsum & \textnormal{Rouge\_Score} \\
PassageRetrieval-En & \textnormal{Retrieval\_Score} \\
\textnormal{Passage Count} & \textnormal{Count\_Score} \\  
Lcc & \textnormal{Code\_Sim\_Score} \\ 
RepoBench-P & \textnormal{Code\_Sim\_Score} \\
\hline
\end{tabular} 
\end{table}

\subsection{Details of Experimental Results}
This section mainly presents the performance of all tasks corresponding to each type of experiment in LongBench. These experimental results are reported in Table~\ref{Zero-shot-tasks}.

\begin{table}[ht]
\caption{Comparison of Experimental Performance on Different Datasets for Various Tasks in LongBench, Using Baseline Models Provided by LongBench. 3D-RPE-LLaMA2-7B is our model.}
\label{Zero-shot-tasks}
\vskip 0.1in
\begin{center}
%\begin{small}
\begin{sc}
\begin{tabular}
{p{5cm}p{2.4cm}p{2cm}p{3.2cm}} \hline
\toprule
\textnormal{Single-Document QA} & \textnormal{Narrative QA} & \textnormal{Qasper} & \textnormal{MultiFieldQA-En}  \\ 
\hline
\textnormal{LLaMA2-7B-Chat-$4k$}  & 18.7        &    19.2       &      36.8      \\
\textnormal{LongChat-v1.5-7B-$32k$}  &   16.9   & 27.7  &  41.4 \\
% \textnormal{XGen-7B-$8k$}  & 18.0        &    18.1       &      37.7        \\ 
\textnormal{InternLM-7B-$8k$}    & 12.1      &    16.7       &      23.4        \\
\textnormal{Vicuna-v1.5-7B-$16k$}  &   19.4   &    26.1       &    38.5     \\
\textnormal{LongLora-$16k$}  &   19.8   &    29.1       &    37.1      \\
\textnormal{3D-RPE-LLaMA2-7B(our)} &   40.56   &    41.35 &    60.3    \\
\end{tabular}
\begin{tabular}{p{5cm}p{2.2cm}p{2cm}p{2.8cm}} \hline
\toprule
\textnormal{Multi-Document QA} & \textnormal{Hotpot QA} & \textnormal{2WikiM QA} & \textnormal{Musique}  \\ 
\hline
\textnormal{LLaMA2-7B-chat-$4k$}              & 25.4       &    32.8       &      9.4     \\
\textnormal{LongChat-v1.5-7B-$32k$}  &  31.5 & 20.6 & 9.7  \\
% \textnormal{XGen-7B-$8k$}             & 29.7        &    21.1       &      10.3        \\ 
\textnormal{InternLM-7B-$8k$}    & 28.7      &    22.8       &    9.0       \\
\textnormal{Vicuna-v1.5-7B-$16k$}  &   25.3   &    20.8       &   9.8       \\
\textnormal{LongLora-$16k$}  &  37.01   &    30.26       &    17.14     \\
\textnormal{3D-RPE-LLaMA2-7B(our)}  &   \textbf{62.49}   &    \textbf{58.80}      &    \textbf{59.01}    \\
\end{tabular}

\begin{tabular}{p{5cm}p{2.2cm}p{2cm}p{2.8cm}} \hline
\toprule

\textnormal{Summarization} & \textnormal{GovReport} & \textnormal{QMSum} & \textnormal{MultiNews} \\ 
\hline
\textnormal{LLaMA2-7B-chat-$4k$}              & 27.3       &    20.8       &      25.8   \\
\textnormal{LongChat-v1.5-7B-$32k$}  & 30.8 & 22.7 & 26.4 \\
% \textnormal{XGen-7B-$8k$}             & 27.3        &    20.5       &      26.2      \\ 
\textnormal{InternLM-7B-$8k$}    & 9.7      &    15.9       &    22.8     \\
\textnormal{Vicuna-v1.5-7B-$16k$} &   27.9   &    22.8       &   27.2      \\
\textnormal{LongLora-$16k$}  &   31.53   &    24.13       &    27.74   \\
\textnormal{3D-RPE-LLaMA2-7B(our)}  &  \textbf{32.01}   &   \textbf{25.3}   &   \textbf{29.68}  \\

\end{tabular}
\begin{tabular}{p{5cm}p{2.2cm}p{2cm}p{2.8cm}} \hline
\toprule
\textnormal{Few-shot Learning} & \textnormal{Trec} & \textnormal{Trivia QA} & \textnormal{SAMSum} \\ 
\hline
\textnormal{LLaMA2-7B-chat-$4k$}              & 61.5       &    77.8       &      40.7      \\
\textnormal{LongChat-v1.5-7B-$32k$}  & 63.5 & 82.3 & 34.2 \\
% \textnormal{XGen-7B-$8k$}             & 65.5        &    77.8       &      25.3      \\ 
\textnormal{InternLM-7B-$8k$}    & 52.0      &    77.8       &    21.2      \\
\textnormal{Vicuna-v1.5-7B-$16k$}  &   71.5   &    86.2       &   40.8      \\
\textnormal{LongLora-$16k$}  &   63.5   &    85.69  &  \textbf{41.88}    \\
\textnormal{3D-RPE-LLaMA2-7B-$16k$(our)}   &  \textbf{89.50}  &  \textbf{90.00} &   40.00 \\
\end{tabular}
\small
\begin{tabular}{p{5cm}p{3cm}p{3.5cm}p{2.8cm}} \hline
\toprule
\textnormal{Synthetic Tasks} & \textnormal{Passage Count} & \textnormal{PassageRetrival-En}  \\ 
\hline 
\textnormal{LLaMA2-7B-chat-$4k$}  & 2.1       &    9.8        \\
\textnormal{LongChat-v1.5-7B-$32k$}  & 1.0 &    \textbf{30.5} \\
% \textnormal{XGen-7B-$8k$}             & 2.1        &    8.5          \\ 
\textnormal{InternLM-7B-$8k$}    & 3.0      &    6.0         \\
\textnormal{Vicuna-v1.5-7B-$16k$}  &   \textbf{6.5}   &    4.5           \\
\textnormal{LongLora-$16k$}  &   3.61       &    29.75        \\
\textnormal{3D-RPE-LLaMA2-7B-16k(our)}  &   4.0   &   14.5          \\
\end{tabular}
\begin{tabular}{p{5cm}p{3cm}p{3.5cm}p{2.8cm}} \hline
\toprule
\textnormal{Code Completion} & \textnormal{Lcc} & \textnormal{RepoBench-P}  \\ 
\hline
\textnormal{LLaMA2-7B-chat-$4k$}   & 52.4    &    43.8     \\
\textnormal{LongChat-v1.5-7B-$32k$}  & 53.0 & 55.3 \\
% \textnormal{XGen-7B-$8k$}       & 38.6        &   38.6 \\ 
\textnormal{InternLM-7B-$8k$}    & 44.1     &    28.8  \\
\textnormal{Vicuna-v1.5-7B-$16k$}  &   51.0   &    43.5       \\
\textnormal{LongLora-$16k$}  &   57.61   &    54.45        \\
\textnormal{3D-RPE-LLaMA2-7B-$16k$(our)}  &   \textbf{79.10}   &   \textbf{73.90}   \\
\bottomrule
\end{tabular}
\end{sc}
\small
\end{center}
\vskip -0.2in
\end{table}
\medskip
\small

\end{document}